\newtheorem{theorem}{Theorem}
\title{\LARGE \bf
Design of A Two-point Steering Path Planner Using Geometric Control
}
\author{Yunlong Huang$^{1}$
\thanks{*This work was not supported by any organization}
\thanks{$^{1}$Yunlong Huang is with Controls $\&$ Automated Systems, Research $\&$ Advanced Engineering,
        Ford Motor Company, 2000 Rotunda Drive, MI, USA
        {\tt\small yunlonog@gmail.com}}%
}
\begin{document}

\maketitle
\thispagestyle{empty}
\pagestyle{empty}

\begin{abstract}

For lateral vehicle dynamics, planning trajectories for lane-keeping and lane-change can be generalized as a path planning task to stabilize a vehicle onto a target lane, which is a fundamental element in nowadays autonomous driving systems. On the other hand, two-point steering for lane-change and lane-keeping has been investigated by researchers from psychology as a sensorimotor mechanism of human drivers. In the first part of this paper, using knowledge of geometric control, we will first design a path planner which satisfies five design objectives: generalization for different vehicle models, convergence to the target lane, optimality, safety in lane-change maneuver and low computational complexity. Later, based on this path planner, a two-point steering path planner will be proposed and it will be proved rigorously that this two-point steering path planner possesses the advantage--steering radius of the planned trajectory is smaller than the intrinsic radius of reference line of the target lane. This advantage is also described as ``corner-cutting'' in driving. The smaller driving radius of the trajectory will result in higher vehicle speed along the winding roads and more comfortness for the passengers.
\end{abstract}

\section{INTRODUCTION}

In today's autonomous driving system (e.g., Baidu Apollo \cite{Apollo}), given the reference line of a target lane from a HD (High-Definition) map, to have the vehicle converge to target lane with a lateral vehicle dynamics is a basic path planning task. For a model-based path planner, the first challenge is its generalization for different vehicle models. The lateral vehicle dynamics is a nonlinear system and the complexity of the modeling will arise from single track models (dynamic bicycle model and kinematic bicycle model \cite{Bicycle_models}) to more sophisticated two track vehicle models \cite{Vehicle_planar_dynamics}. Compared to some existing path planners which are designed for some specific vehicle models \cite{Snider}, our path planner as a methodology which can be applied to various vehicle models will be more adorable from the view of design and development of the autonomous driving systems. 

Besides the generalization of the path planner, convergence properties of path planner, such as convergence rate to the target lane and avoidance of hazardous damping behaviors of the planned paths, is also a key objective in the design. As the convergence rate will determine the duration of a lane-change maneuver and avoidance of hazardous damping behaviors is under the consideration of safety of lane-change maneuver. To steering the vehicle in a comfortable manner for the passengers, the path planner needs to be a solution of a related optimal control problem. In today's application of lane-change path planner, the vehicle can abort its on-going lane-change maneuver when its surrounding traffic environment turns to be unsafe for this maneuver. This abortion of on-going lane-change maneuver requests the states of the lane-change trajectory (e.g., orientation of the vehicle velocity and its time-derivative) should be bounded which is a safety objective in design. In the planning module of nowadays autonomous driving system (e.g., Baidu Apollo, up to ver. $3.0.0$ \cite{Apollo}) which runs at $10$Hz and the planning horizon is 8 seconds, it is challenging for the hardware platform of the system to accurately solve a rigorously formulated path planning related numerical optimization problem (e.g., Model Predictive Control problem) online. Therefore, the fifth design objective of the path planner is low computational complexity and the complexity of our path planner turns to be linear with planning horizon.  

Besides the research from engineering which focuses on designing path planners for autonomous systems, researchers from psychology \cite{Land} \cite{Salvucci} are also investigating the sensorimotor mechanism of human drivers in lane-keeping and lane-change maneuvers and this mechanism is named as two-point steering. In different research experiments, two-point steering exhibits some advantages. For example, the steering radius of the planned trajectory is smaller than the intrinsic steering radius of the target lane. Designing a path planner for two-point steering and integrating it into the autonomous driving system is beneficial for the system in two-fold: first the performance of the path planner is directly improved and secondly it makes the system more similar to a human driver which makes it more acceptable in society.  

In Section II, the projection of a given vehicle dynamics into an orthonormal frame will be introduced as the geometric control background. Using this projection, our path planner, as a methodology can be applied to various vehicle dynamics. The optimality and convergence of the path planner will be investigated in Section III. The safety design of the path planner (abortion of lane-change maneuver) is in Section IV. Based on this path planner, we will design a two-point steering path planner and its advantage will be proved in Section V. Future work will be proposed in Section IV.

\section{Projection and Orthonormal Frame}

Orthonormal frame was introduced for path planner design in previous work already (e.g., \cite{Zhang}). The purpose of this section is to emphasize that by projecting the vehicle state space into an orthonormal frame, our path planner is applicable to different vehicle models, if the orientation of the vehicle velocity is locally controllable \cite{Khalil}.
Given a kinematic bicycle model, as illustrated in Figure \ref{fig:kbm}, \\
\begin{figure}[t]
      \centering
      \includegraphics[scale=0.3]{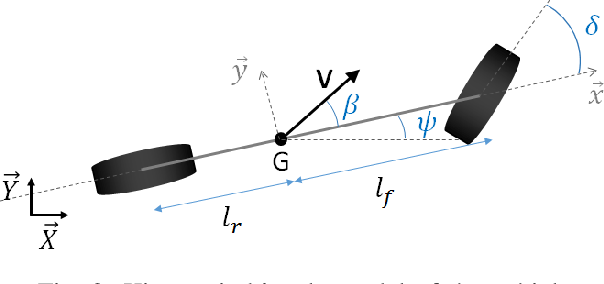}
      \caption{Kinematic bicycle model}
      \label{fig:kbm}
\end{figure}
\begin{align}\label{eq:kbm}
    \dot{x} &= v \cos(\psi + \beta) \nonumber  \\
    \dot{y} &= v \sin(\psi + \beta) \nonumber \\
    \dot{\psi} &= \frac{v}{l_{r}} \sin(\beta) \nonumber \\
    \beta &= \tan^{-1} \left(\frac{l_{r} \tan(\delta)}{l_{f} + l_{r}}\right) \nonumber \\
    \dot{\delta} &= u
\end{align}
where $(x, y)$ is the coordinate of the vehicle's center of gravity $G$ in the coordinate reference system of $(\vec{X}, \vec{Y})$ , $\psi$ is the orientation of the vehicle frame, $\beta$ is the slip angle of the center of gravity, $\delta$ is the angle of the front-wheel relative to the body frame, $u$ is the turning rate of the front-wheel which is also the control input and $l_f$ ($l_r$) is the length between the center of gravity and the front-axle (rear-axle) of the vehicle. 

In a HD map, the center-line of a lane will be used as the \textit{reference line} for path planning. Given the reference line and the vehicle state, assuming that the point on the reference line which is closest to the vehicle is unique (i.e. \textit{shadow point} $s$), let $\vec{r}_{s}$ denote the position vector of the shadow point in Figure \ref{fig:config}, let $\vec{x}_{s}$ denote the unit tangent vector along the reference line at the shadow point and let $\vec{y}_{s}$ denote the unit normal vector. We use the convection that a unit normal vector completes a right-handed orthonormal frame with the corresponding unit tangent vector. The reference line can be described as
\begin{align}
    \dot{\vec{r}}_{s} &= v_{s} \vec{x}_{s} \\ \nonumber
    \dot{\vec{x}}_{s} &= \vec{y}_{s} v_{s} \kappa \\ \nonumber
    \dot{\vec{y}}_{s} &= -\vec{x}_{s} v_{s} \kappa,
\end{align}
where $v_{s}$ is the speed which can be planned by a longitudinal planner along the reference line and $\kappa$ is the curvature.

Let $\vec{r}_{v}$ denote the position vector of the vehicle, let $\vec{x}_{v}$ denote the unit tangent vector, let $\vec{y}_{v}$ denote the unit normal vector, $v$ is the speed of the vehicle, $\omega$ is the angular velocity of the orientation of the vehicle velocity. We have the orthonormal frame about the vehicle:
\begin{align}
    \dot{\vec{r}}_{v} &= v \vec{x}_{v} \\ \nonumber
    \dot{\vec{x}}_{v} &= \vec{y}_{v} \omega \\ \nonumber
    \dot{\vec{y}}_{v} &= -\vec{x}_{v} \omega.
\end{align}
\begin{figure}[t]
      \centering
      \includegraphics[scale=0.3]{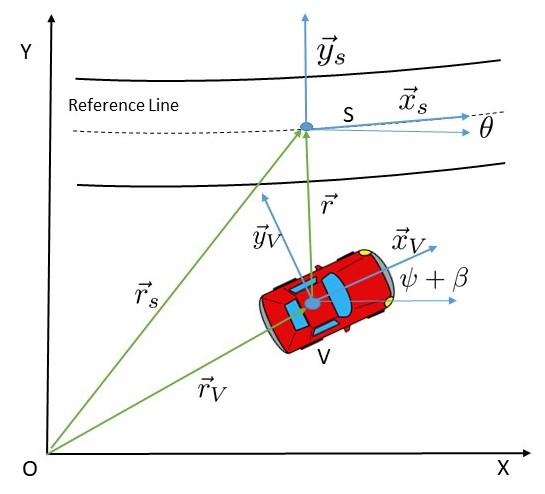}
      \caption{Orthonormal frames and reference line}
      \label{fig:config}
\end{figure}
This orthonormal frame of the vehicle is the projection of the state space of vehicle $(x, y, \phi, \beta, \delta)$ in \eqref{eq:kbm} to a state space of $(x, y, \psi + \beta)$, where
\begin{equation}
\vec{r}_{v} = \begin{bmatrix} x\\ y\end{bmatrix}, \vec{x}_{v} = \begin{bmatrix} \cos(\psi + \beta) \\ \sin(\psi + \beta)\end{bmatrix}, \vec{y}_{v} = \begin{bmatrix} -\sin(\psi + \beta) \\ \cos(\psi + \beta)\end{bmatrix}
\end{equation}
and 
\begin{equation}
    \omega \equiv \dot{\psi} + \dot{\beta} = \frac{v}{l_r} \sin(\beta) + g(\delta) u 
\end{equation}
where 
\begin{equation}
    g(\delta) = \frac{\frac{l_r}{l_f + l_r}}{\left(1 + \left(\frac{l_r \tan(\delta)}{l_{f} + l_{r}}\right)^{2} \right) \cos(\delta)^{2}}.
\end{equation}
As $\psi + \beta$ and $u$ are scalars, the velocity orientation ($\psi + \beta$) is locally controllable \cite{Khalil}.
We define 
\begin{equation}
\vec{r} \equiv \vec{r}_{s} - \vec{r}_{v}    
\end{equation}
to be the vector from the vehicle pointing to the shadow point which can be understood as the ``ray" cast from the vehicle to its shadow. We assume that initially $\| \vec{r} \|  = \sqrt{\left<\vec{r}, \vec{r}\right>}>0$, where $\left<\, , \, \right>$ stands for the inner product of vectors. The time derivative of $\| \vec{r}\|$ is
\begin{equation}
\begin{split}
    \frac{d}{dt} \| \vec{r} \| &= \frac{\left<\vec{r}, \dot{\vec{r}}\right>}{\|\vec{r}\|} = \left<\frac{\vec{r}}{\| \vec{r}\|}, (v_s \vec{x}_{s} - v \vec{x}_{v}) \right> \\
    &=v_{s} \left< \frac{\vec{r}}{\|\vec{r}\|}, \vec{x}_{s}\right> - v \left< \frac{\vec{r}}{\| \vec{r}\|}, \vec{x}_v\right> .
\end{split}
\end{equation}
As the shadow point is the closest point on the reference line to the vehicle, $\left<\vec{r}, \vec{x}_{s} \right> = 0$ and
if the reference line is on the right side of the vehicle:
\begin{equation}
    \vec{y}_{s} = -\frac{\vec{r}}{\| \vec{r}\|};
\end{equation}
and if the reference line is on the left side of the vehicle
\begin{equation}
    \vec{y}_{s} = \frac{\vec{r}}{\| \vec{r}\|}.
\end{equation}

As we will have an independent planner for the longitudinal speed $v_{s}$ along the reference line, we can decide the associated speed $v$ of the vehicle. Given $\forall t$ 
\begin{equation}
\begin{split}
    \frac{d}{dt} \left<\vec{r}, \vec{x}_{s} \right> &= 0 \\ 
                                        &= \left<\dot{\vec{r}}, \vec{x}_{s} \right> + \left<\vec{r}, \dot{\vec{x}}_{s}\right> \\ 
                                        &= v_{s}\left( 1 + \left<\vec{r}, \vec{y}_{s} \kappa\right>\right) - v\left<\vec{x}_{s}, \vec{x}_{v}\right>,
\end{split}
\end{equation}
we have
\begin{equation} \label{speed}
    v = \frac{v_{s}\left( 1 + \left<\vec{r}, \vec{y}_{s} \kappa \right>\right)}{\left<\vec{x}_{s}, \vec{x}_{v}\right>} .
\end{equation}
If a more complex vehicle model is applied to this methodology, for example, dynamic bicycle model, the velocity orientation can still be shown to be a locally controllable state variable in the orthnormal frame. The actuator which can vary the velocity orientation will not be the angular velocity of the front wheel $u$ in \eqref{eq:kbm}. Instead, it will be the steering angle of the front wheel.

\section{Optimality and Convergence}
\subsection{Optimality and convergence}
Using the idea of \textit{sliding mode control} \cite{Khalil}, with a constant $k>0$, we define a new state $e$ of the vehicle referring to the reference line and a new control $\tilde{u}$ associated to it where
\begin{equation}\label{eq:error_def}
    e \equiv \psi + \beta - \theta -  k \left<\vec{y}_{s}, \vec{r} \right>
\end{equation}
and 
\begin{equation}
\begin{split}
    \dot{e} &= \dot{\psi} + \dot{\beta} - \dot{\theta} - k \left<\dot{\vec{y}}_{s}, \vec{r}  \right> -  k \left<\vec{y}_{s}, \dot{\vec{r}} \right> \\
    &= \frac{v}{l_r} \sin(\beta) + g(\delta) u - \dot{\theta} + k v \sin(\psi + \beta - \theta) \\ 
    & = \tilde{u}.
\end{split}
\end{equation}
For steering of the front wheel $u$, $e$ is also locally controllable. To show the solution from following optimal control problem provides the comfortness of the passengers by minimizing the steering effort while the lateral deviation of the vehicle from the reference line is converging to zero, we decompose $u$ into $u = u_{s} + u_{c}$, where $u_{s}$ is the control to stabilize the orientation difference $\phi+ \beta - \theta$ at zero which is a complimentary steering effort and $u_{c}$ is the control to have $e$ to converge to zero.
$u_{s}$ is defined as
\begin{equation}
   u_{s} \equiv g(\delta)^{-1}\left(-\frac{v}{l_{r}} \sin(\beta) + \dot{\theta} - k v \sin(\psi+ \beta - \theta) \right)
\end{equation}
and 
\begin{equation}
    \dot{e} = \tilde{u} = g(\delta) u_c.
\end{equation}
We have a linear system with state $e$ and a control $\tilde{u}$, where $e$ describes the deviation of the vehicle away from the manifold $\psi + \beta - \theta - k \left<\vec{y}_{s}, \vec{r} \right> = 0$. To minimize this deviation, we have a simple LQR (Linear Quadratic Regulator) problem:
\begin{equation}\label{eq:LQR}
    \textrm{min}_{\tilde{u}} J = \int^{\infty}_{0} \left( \frac{e^{2}}{2} + \frac{\lambda}{2} \tilde{u}^{2}\right) dt.
\end{equation}
where $\lambda$ is a balance coefficient between the error and the steering effort to reduce this error about lateral deviation. The minimization of this steering effort $u^{2}_{c}$ improves the comfortness for the passengers.
With the solution of the ARE (Algebraic Riccati Equation), we have the optimal control
\begin{equation}
    \tilde{u}^{\star} = -\frac{1}{\sqrt{\lambda}} e.
\end{equation}
Using the bicycle model \eqref{eq:kbm}, the corresponding optimal control will be 
\begin{equation}
    \dot{\delta}^{*} = g(\delta)^{-1}\left(-\frac{1}{\sqrt{\lambda}}e + \dot{\theta} -\frac{v \sin(\beta)}{l_{r}} - kv \sin(\psi+ \beta - \theta)\right),
\end{equation}
if it is a feasible value of the front wheel angular velocity.  
\begin{theorem}
With the optimal control $\tilde{u}^{\star} = -\frac{1}{\sqrt{\lambda}} e$, both the lateral deviation of the vehicle from the target reference line (i.e. $\left<\vec{y}_{s}, \vec{r} \right>$) and the difference between the orientation of the vehicle velocity and the orientation of the shadow point (i.e. $\psi+ \beta - \theta$) will converge to zeros.
\end{theorem}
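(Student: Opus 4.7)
The plan is to decouple the convergence into three stages: show exponential decay of the sliding variable $e$, convert the lateral-deviation dynamics into a scalar non-autonomous ODE driven by $e$, and then invoke a vanishing-perturbation argument to conclude convergence of both the lateral deviation and the orientation difference.

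First, I would substitute the optimal control into the closed-loop $e$-equation, which by construction yields the scalar linear ODE $\dot{e} = -\frac{1}{\sqrt{\lambda}} e$, with solution $e(t) = e(0)\,e^{-t/\sqrt{\lambda}}$ decaying exponentially. This step is essentially bookkeeping, since the change of coordinates in \eqref{eq:error_def} was designed precisely so that the sliding dynamics are linear under the LQR solution.

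Second, I would extract the dynamics of the signed lateral deviation $d \equiv \langle \vec{y}_{s}, \vec{r}\rangle$. Differentiating and applying the frame equations for $\dot{\vec{y}}_{s}$ and $\dot{\vec{r}} = \dot{\vec{r}}_{s} - \dot{\vec{r}}_{v}$, together with the orthogonality $\langle \vec{x}_{s}, \vec{r}\rangle = 0$ and the decomposition $\vec{x}_{v} = \cos(\psi+\beta-\theta)\,\vec{x}_{s} + \sin(\psi+\beta-\theta)\,\vec{y}_{s}$, I would obtain $\dot{d} = -v\sin(\psi+\beta-\theta)$. Substituting the identity $\psi+\beta-\theta = e + kd$ from \eqref{eq:error_def} then yields the scalar driven ODE
\begin{equation*}
    \dot{d} = -v \sin(e + k d),
\end{equation*}
in which $e(t)$ is now a known exponentially vanishing input.

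Third, I would analyze the unperturbed dynamics $\dot{d} = -v\sin(kd)$ via the Lyapunov candidate $V = \tfrac{1}{2} d^{2}$. Its derivative $\dot{V} = -v\, d \sin(kd)$ is strictly negative whenever $d \neq 0$ and $|kd| < \pi$, provided $v$ is uniformly positive, which follows from \eqref{speed} under the mild assumption that the vehicle heading is never orthogonal to $\vec{x}_{s}$ and that $1 + \langle \vec{r}, \vec{y}_{s}\kappa\rangle > 0$. A standard cascade argument for vanishing perturbations, or equivalently an input-to-state stability bound on the scalar system above with respect to $e$, then transfers the conclusion to the perturbed equation and gives $d(t) \to 0$. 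Combining this with $e(t) \to 0$, the identity $\psi+\beta-\theta = e + kd$ immediately yields convergence of the orientation difference.

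The main obstacle is the third step: the $d$-dynamics are non-autonomous and nonlinear, and a rigorous argument requires a uniform lower bound on $v$, control of the invariant region $|kd|<\pi$ so that the unperturbed system's basin of attraction is preserved, and an appeal to either an ISS estimate or the classical asymptotically-autonomous-cascade lemma to convert the stability of $\dot d = -v\sin(kd)$ into convergence of the perturbed trajectory.
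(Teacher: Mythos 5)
Your proposal is correct, and it shares the paper's overall cascade skeleton (the sliding variable $e$ decays exponentially, after which the remaining dynamics settle), but the second half runs in the opposite direction with different tools. The paper first proves convergence of the orientation difference $\Delta\theta = \psi+\beta-\theta$: it differentiates $\dot{\Delta\theta} = -\frac{1}{\sqrt{\lambda}}e - kv\sin(\Delta\theta)$ once more to obtain an autonomous second-order ``damped pendulum'' equation, argues via the linearization's eigenvalues $-\frac{1}{\sqrt{\lambda}}$ and $-kv$ that $(0,0)$ is the unique stable equilibrium on $(-\pi,\pi]$, and only then deduces $\langle\vec{y}_{s},\vec{r}\rangle\to 0$ from the manifold identity $\Delta\theta = k\langle\vec{y}_{s},\vec{r}\rangle$. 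You instead treat the lateral deviation $d$ as the primary slow variable, reduce it to the scalar driven ODE $\dot{d} = -v\sin(e+kd)$, handle the vanishing input $e$ by an ISS/vanishing-perturbation argument with $V=\frac{1}{2}d^{2}$, and recover $\Delta\theta\to 0$ at the end from $\Delta\theta = e+kd$. Your route buys more honesty about the non-autonomous coupling: the paper's eigenvalue argument is purely local and silently treats the second-order system as if the $e$-transient were already gone, whereas you explicitly flag the need for a uniform lower bound on $v$ and preservation of the region $|kd|<\pi$ (limitations that apply equally, but tacitly, to the paper's proof). What the paper's route buys is a direct reading of the two convergence rates $\frac{1}{\sqrt{\lambda}}$ and $kv$ from the second-order equation, which it then exploits in the subsequent section on removing hazardous oscillation; your first-order formulation would need a small extra step to exhibit those two time scales.
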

\begin{proof}
With the optimal control $\tilde{u}^{\star} = -\frac{1}{\sqrt{\lambda}} e$, we have $\dot{e} = -\frac{1}{\sqrt{\lambda}} e$, which indicates that $e$ converges to zero exponentially with the rate $\frac{1}{\sqrt{\lambda}}$ and $e(t) = e_{0} \exp\left(-\frac{t}{\sqrt{\lambda}}\right)$, where $e_0$ is the initial value of the error. With $e$ converging to zero, we have the state of the vehicle converge to the manifold:
\begin{equation}\label{eq:manifold}
    \psi + \beta - \theta - k \left<\vec{y}_{s}, \vec{r}\right> = 0.
\end{equation}
Meanwhile, with the feedback control, the dynamics of the orientation difference $ \Delta \theta \equiv \psi + \beta - \phi$ between the vehicle velocity and the tangent vector of the shadow point is:
\begin{equation}\label{eq:Delta_theta_dynamics}
    \dot{\Delta \theta} = -\frac{1}{\sqrt{\lambda}} \left( \Delta \theta - k\left<\vec{y}_{s}, \vec{r}_{2} \right> \right) - k v \sin(\Delta \theta)
\end{equation}
and take the time derivative on both sides of above equation again we have:
\begin{equation}\label{eq:second_order_system}
    \ddot{\Delta \theta} + \left(\frac{1}{\sqrt{\lambda}} + k v \cos(\Delta \theta) \right) \dot{\Delta \theta} + \frac{k v}{\sqrt{\lambda}} \sin(\Delta \theta) = 0 .
\end{equation}
Near the equilibrium point $(0, 0)$ of $(\Delta \theta, \dot{\Delta \theta})$, \eqref{eq:second_order_system} has two eigenvalues $(-\frac{1}{\sqrt{\lambda}})$ and $- k v$. As in the range $\Delta \theta \in (-\pi, \pi]$, $(0, 0)$ is the unique stable equilibrium point. The orientation of the vehicle will converge to the orientation of its shadow point. Going back to the manifold $\Delta \theta - k \left<\vec{y}_{s}, \vec{r}\right> = 0$, when $\Delta \theta = 0$, $\left<\vec{y}_{s}, \vec{r}\right> = \pm \| \vec{r}_{2} \|$ = 0. Therefore, with $\Delta \theta$ converging to zero, on the manifold \eqref{eq:manifold}, the deviation of the vehicle from the target lane will converge to zero.
\end{proof}

Apparently, in above theorem, the convergence rates at which lateral deviation and orientation difference both converge to zeros are $\frac{1}{\sqrt{\lambda}}$ and $k v$. Given the vehicle speed $v$, by varying the values of $\lambda$ and $k$, the convergence rates can be controlled and in a lane-change scenario, these convergence rates will describe how fast a lane-change maneuver will be completed by the vehicle. 

\subsection{Removing hazardous oscillation}
There are two independent coefficients $(k, \lambda)$ whose value we can choose. The dynamics of the lateral deviation is 
\begin{equation}\label{eq: lateral_dynamics}
    \frac{d}{dt} \left<\vec{y}_{s}, \vec{r} \right> = - v \sin(\Delta \theta) .
\end{equation}
Based on \eqref{eq:Delta_theta_dynamics},
\begin{equation}\label{eq:second_order_system_II}
    \frac{d^{2}}{dt^{2}} \left<\vec{y}_{s}, \vec{r} \right> +  k v \frac{d}{dt}  \left<\vec{y}_{s}, \vec{r} \right> = \frac{k v}{\sqrt{\lambda}} \cos(\Delta \theta) e_{0} \exp(-\frac{t}{\sqrt{\lambda}}).
\end{equation}
If we choose $\frac{1}{\sqrt{\lambda}} > k v$, the term on the right side of \eqref{eq:second_order_system_II} will decay faster than the dynamics of $\left<\vec{y}_{s}, \vec{r} \right>$ (i.e., $e$ is the fast mode of the system). The slow dynamics will be
\begin{equation}
    \frac{d^{2}}{dt^{2}} \left<\vec{y}_{s}, \vec{r} \right> +  k v \frac{d}{dt}  \left<\vec{y}_{s}, \vec{r} \right> = 0.
\end{equation}
In this way, the dynamics of $\frac{d}{dt}\left<\vec{y}_{s}, \vec{r} \right>$ will not be affected by the ``external force" $\frac{k v}{\sqrt{\lambda}} \cos(\Delta \theta) e_{0} \exp(-\frac{t}{\sqrt{\lambda}})$ in a long time. With $\frac{1}{\sqrt{\lambda}} > k v$, $e$ will first converge to $0$ as the fast mode and the time-derivative of the lateral deviation will converge to zero as the slow mode. In this way, potential hazard oscillation (in the space of time-derivative of the lateral deviation) will be removed. Here, we provide one example for this hazardous oscillation behavior. Given that the vehicle speed is $v = 1$ meter/second and $\lambda = 1$, from time $0$ to time $10$, there are three trajectories of the vehicle's lateral position of different $k$ values: $k = 0.5$, $k = 1$ and $k = 1.5$ in Figure \ref{fig:lateral_deviation} where all these trajectories converge to the target lane ($y = 3.5 \textrm{meters}$) from original target lane ($y = 0 \textrm{meter}$). In the plot of the time-derivatives of the lateral deviation along these trajectories in Figure \ref{fig:dynamics_lateral_deviation}, the hazardous oscillation with high value of $k$ ($k = 1.5$ and $k v > \frac{1}{\sqrt{\lambda}}$) appeared. Thus, to avoid this oscillation in path planning, given $v$ and a value of $\lambda$, we will choose $k$ such that 
\begin{equation}\label{eq:condition_1}
    k v = \frac{\lambda_{0}}{\sqrt{\lambda}}
\end{equation}
where $\lambda_{0} \in (0, 1)$.

\begin{figure}[t]
  \centering
  \includegraphics[scale=0.15]{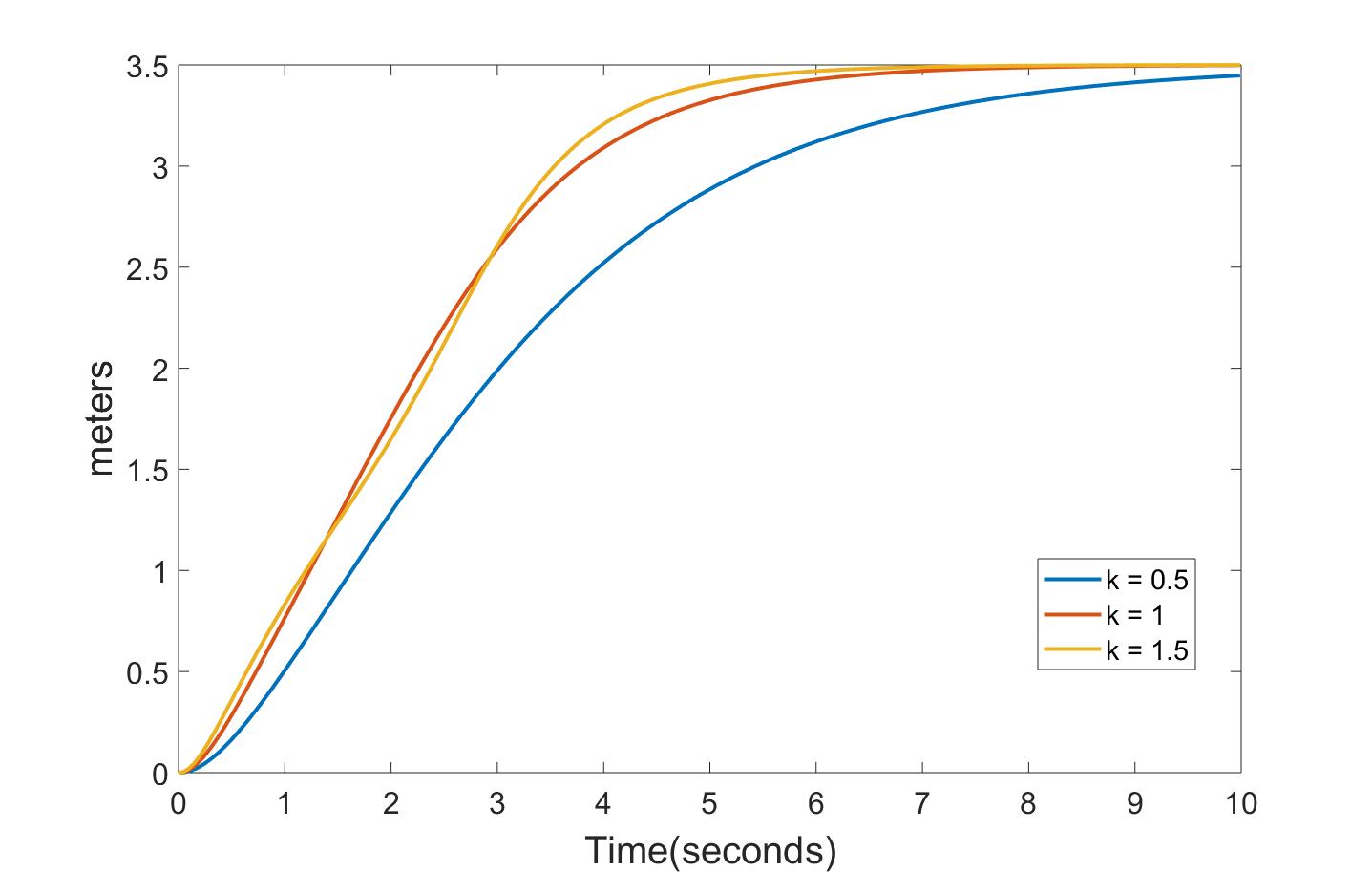}
  \caption{Lateral deviation of a lane-change maneuver}
  \label{fig:lateral_deviation}
\end{figure}
\begin{figure}
  \centering
  \includegraphics[scale=0.15]{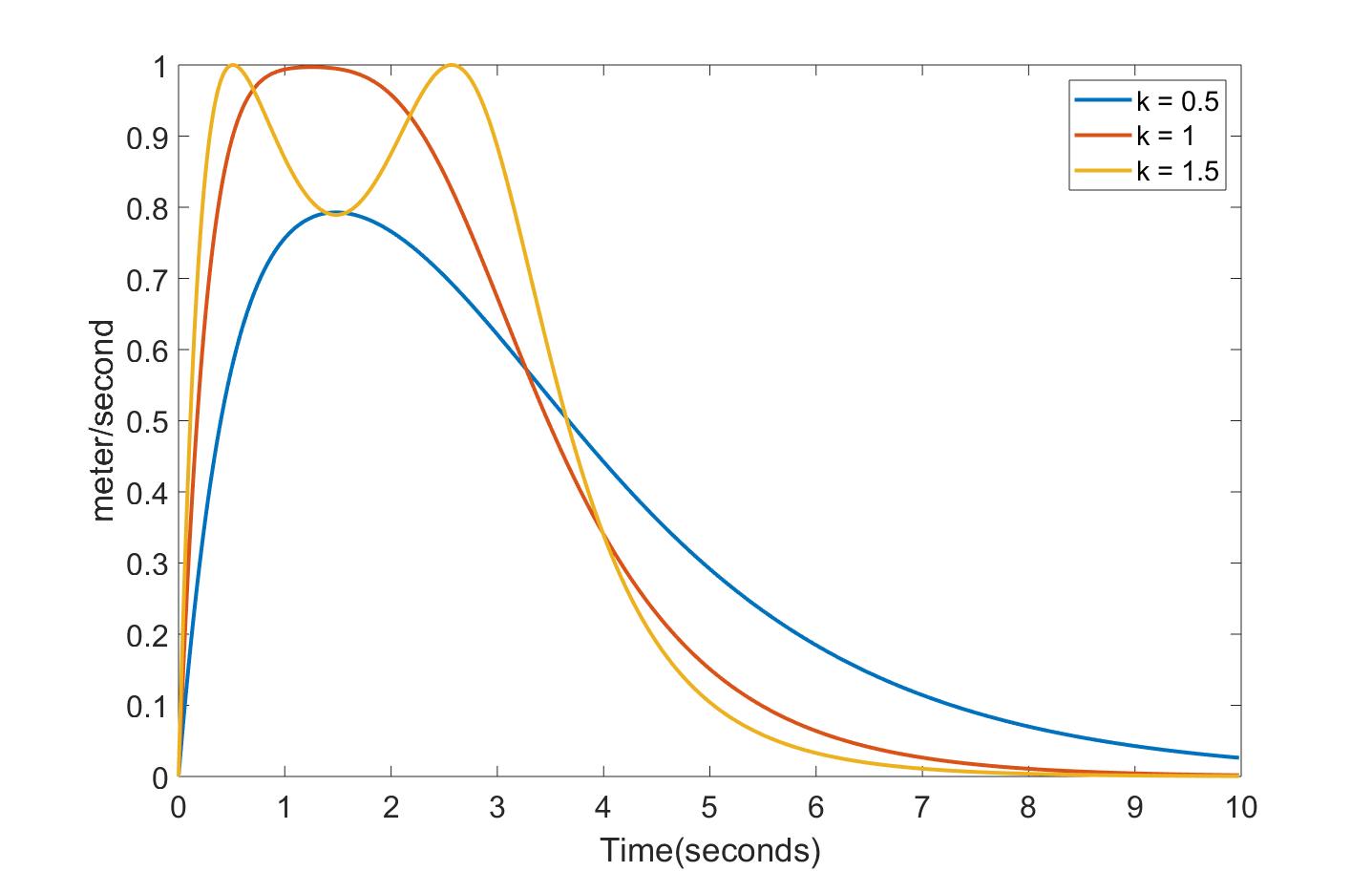}
  \caption{Time derivative of lateral deviation}
  \label{fig:dynamics_lateral_deviation}
\end{figure}

\section{Safety of Lane-change Maneuvers}

Lane-change maneuver is one task for this path planner. To drive the vehicle from one lane to another, we need to have a non-zero lateral velocity. This 
lateral velocity is generated due to the difference between the vehicle velocity orientation and the orientation of the corresponding shadow point as in equation \eqref{eq: lateral_dynamics}. When the surrounding traffic environment of the vehicle turns to be unsafe for our ego-vehicle to continue its current lane-change maneuver, it is necessary to have the vehicle to steer back to its original reference line quickly and safely. One necessary condition is that both $\|\Delta \theta\|$ and $\|\dot{\Delta \theta}\|$ can be bounded by setting up the values of $\lambda$ and $k$ in the process of a lane-change.  

Assuming that $\|\Delta \theta \| \ll 1$ and using \eqref{eq:condition_1}, the dynamics of $\Delta \theta$ and $e$ is a linear system:
\begin{align}
    \dot{e} &=-\frac{1}{\sqrt{\lambda}} e \nonumber \\
    \dot{\Delta \theta} &= -\frac{1}{\sqrt{\lambda}} e - \frac{\lambda_{0}}{\sqrt{\lambda}}\Delta \theta .
\end{align}
If the initial condition is $\Delta \theta (0) = 0$, 
\begin{align}
    \Delta \theta &= \left(\frac{e_{0}}{1-\lambda_{0}}\right) \left( \exp\left( -\frac{1}{\sqrt{\lambda}}t\right) - \exp\left(-\frac{\lambda_{0}}{\sqrt{\lambda}}t\right)\right) \nonumber \\
    \dot{\Delta \theta} &= -\frac{e_{0}}{\sqrt{\lambda}(1 - \lambda_{0})} \left(\exp \left(-\frac{t}{\sqrt{\lambda}}\right) -\lambda_{0} \exp\left(-\frac{\lambda_{0}}{\sqrt{\lambda}}t\right)\right)
\end{align}
where $e_{0}$ is the initial value of the error $e$, which only depends on the lateral deviation of the vehicle from the target lane (usually, its absolute value is proportional to the lane-width, based on the definition \eqref{eq:error_def}). As $\Delta \theta$ and $\dot{\Delta \theta}$ are linear combination of exponential functions, it indicates that each of them has a unique extremal along time $t$. To have the maximum values of $\|\Delta \theta \|$ and $\| \dot{\Delta \theta}\|$ are bounded by $C_{1}$ and $C_{2}$,
\begin{align}\label{eq:condition_2}
    \max \|\Delta \theta \| &= \frac{\|e_{0}\|}{\lambda_{0}} \exp \left( \frac{\ln{\lambda_{0}}}{1 - \lambda_{0}}\right) \le C_{1} \nonumber \\ 
    \max \|\dot{\Delta \theta} \| &= \frac{\|e_{0}\|}{\lambda_{0} \sqrt{\lambda}} \exp \left(\frac{2 \ln{\lambda_{0}}}{1 - \lambda_{0}}\right) \le C_{2}.
\end{align}
Given the vehicle speed $v$ is fixed, $k v = \frac{\lambda_{0}}{\sqrt{\lambda}}$ and $\|e_0\| = k W$, where $W$ is the lane-width, equations in \eqref{eq:condition_2} turn out that
\begin{align}\label{eq:condition_2_new}
    \frac{1}{\sqrt{\lambda}} \exp\left( \frac{\ln{\lambda_{0}}}{1 - \lambda_{0}} \right) &\le \frac{C_{1} v}{W} \\ \nonumber
    \frac{1}{\lambda} \exp\left( \frac{2\ln{\lambda_{0}}}{1 - \lambda_{0}} \right) &\le \frac{C_{2} v}{W}.
\end{align}
This safety necessary condition can be summarized as
\begin{equation}\label{eq:condtion_2_final}
    \frac{1}{\sqrt{\lambda}} \exp\left( \frac{\ln{\lambda_{0}}}{1 - \lambda_{0}} \right) \le \min \left\{\frac{C_{1} v}{W}, \sqrt{\frac{C_{2} v}{W}} \right\}.
\end{equation}
Based on constraints \eqref{eq:condition_1} and \eqref{eq:condtion_2_final} on parameters $\lambda_{0}$, $\lambda$ and $k$, given a vehicle speed $v$, we can have a path planner whose lane-change maneuver can be safely aborted and there will be no hazardous oscillation along the trajectories planned by this planner.

\section{Two-point Steering Path Planner}

The researchers who focus on the sensorimotor mechanism of human drivers noticed that people are using a two-point steering method to negotiate the curves along winding roads \cite{Land} \cite{Salvucci}. The general idea of \textit{two-point steering} is that the drivers utilize both the near and far regions of the road for guiding during steering.The information from near region is used to keep the vehicle in the center of the lane and the information from the far region facilitates the drivers to have a better performance while driving along a corner. One result of the two-point steering is ``corner-cutting" behavior: the drivers can utilize slight lateral deviation from the center of the lane to have the steering radius of the planned trajectory smaller than intrinsic radius of reference line of the lane. In some experiments and simulations (e.g., \cite{Sentouh} \cite{Okafuji}), researchers showed that using the idea of two-point steering, the planned trajectories possessed ``corner-cutting'' behaviors. Here, our work using geometric control, provides a rigorous theoretical foundation for ``corner cutting'' behaviors of two-point steering.  

In this section, we will implement the two-point steering based on the basic path planner designed in Section III and the behavior of ``corner-cutting" will be proved. The new state $e$ will be defined as:
\begin{equation}\label{eq:new_error}
     e \equiv \psi + \beta  - ((1-\alpha) \theta_{n} + \alpha \theta_{f}) - k \left<\vec{y}_{s}, \vec{r}\right>
\end{equation}
where $\theta_{n}$ is the orientation of the shadow point, $\theta_{f}$ is the orientation of the a way-point along the reference line along the direction of the vehicle and $\alpha \in [0, 1)$. 

With this novel error state $e$, we can still show that $e$ is locally controllable by the angular velocity of the front wheel and the solution of the optimal control problem as in \eqref{eq:LQR} still have the error state converge to zero in an exponentially decaying manner 
\begin{equation}
    \dot{e} = -\frac{1}{\sqrt{\lambda}} e.
\end{equation}
Using the theorem in Section III, if the ego-vehicle is tracking a straight lane (where $\theta_{n} = \theta_{f} = 0$), the vehicle will converge to the center of the lane. But, when the ego-vehicle negotiate a corner of constant curvature, it will exhibit a ``corner-cutting" behavior.
\begin{theorem}
    Using the two-point steering designed with the error state defined in \eqref{eq:new_error}, when the ego-vehicle is tracking a corner of constant curvature $\kappa_{0}$, with a appropriate set of parameters, the absolute value of the curvature of the vehicle trajectory $\|\kappa_{e}\|$ is smaller than $\|\kappa_{0}\|$. 
\end{theorem}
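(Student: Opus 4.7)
The plan is to push the same optimal control argument as in Theorem 1 through with the redefined error state $e$ of \eqref{eq:new_error}: on closed loop, $\dot e = -e/\sqrt{\lambda}$, so after an exponentially short transient the vehicle state lies on the invariant manifold $e = 0$, on which the trajectory curvature can be computed directly from the kinematic relations already derived in Section II.

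On a constant-curvature corner with $\kappa \equiv \kappa_0$, if the far way-point is taken at a fixed arc length $L$ ahead of the shadow point, then $\theta_f - \theta_n = L\kappa_0$, so $(1-\alpha)\theta_n + \alpha\theta_f = \theta_n + \alpha L\kappa_0$ and the manifold condition reduces to $\Delta\theta \equiv \psi + \beta - \theta_n = \alpha L\kappa_0 + k d$ with $d \equiv \langle\vec{y}_s, \vec{r}\rangle$. Differentiating this identity and using $\dot\theta_n = \dot\theta_f = v_s\kappa_0$ together with \eqref{eq: lateral_dynamics}, the velocity-orientation angular rate on the manifold satisfies $\omega \equiv \dot\psi + \dot\beta = v_s\kappa_0 - k v \sin\Delta\theta$; combining this with the speed constraint \eqref{speed}, $v = v_s(1 + d\kappa_0)/\cos\Delta\theta$, gives the closed-form instantaneous trajectory curvature
\begin{equation*}
    \kappa_e \;=\; \frac{\omega}{v} \;=\; \frac{\kappa_0 \cos\Delta\theta}{1 + d\kappa_0} \;-\; k\sin\Delta\theta .
\end{equation*}
With this expression in hand, the remaining step is a one-dimensional analysis: I would study the reduced dynamics $\dot d = -v\sin(\alpha L\kappa_0 + k d)$ on the manifold, identify its locally stable equilibrium, substitute back into the formula for $\kappa_e$, and extract conditions on $(\alpha, L, k)$ under which $|\kappa_e| < |\kappa_0|$.

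I expect the main obstacle to be the final sign-and-parameter bookkeeping. The two contributions $\kappa_0\cos\Delta\theta/(1+d\kappa_0)$ and $-k\sin\Delta\theta$ in the formula for $\kappa_e$ pull the trajectory curvature in opposite directions: a purely concentric offset $d^*$ at which $\sin\Delta\theta = 0$ on its own tends to make $|\kappa_e|$ larger than $|\kappa_0|$ by shrinking the effective radius, while the additive $-k\sin\Delta\theta$ feedback term along the transient tends to reduce it. The ``appropriate set of parameters'' in the statement must therefore be chosen so that the reduction dominates the amplification, and one needs to isolate an explicit admissible range of $\alpha L$ and $k$ (in terms of $\kappa_0$ and the initial lateral deviation) which produces $|\kappa_e| < |\kappa_0|$ along the relevant portion of the tracking trajectory.
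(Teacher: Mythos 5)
Your reduction is correct as far as it goes, and it is a genuinely different route from the paper: the paper works in the Laplace domain, deriving a second-order ODE for $\theta_v$, cancelling the poles at $-kv$ and $-1/\sqrt{\lambda}$, and reading off $\kappa_e$ from $\dot\theta_v$, whereas you restrict to the invariant manifold $e=0$ and compute $\kappa_e$ pointwise. Your formula
$\kappa_e = \kappa_0\cos\Delta\theta/(1+d\kappa_0) - k\sin\Delta\theta$
with $\Delta\theta = \alpha L\kappa_0 + kd$ is right (here your $L$ is the paper's $\Delta d_0$).

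The gap is in the step you defer: the one-dimensional analysis cannot produce the theorem. The unique stable equilibrium of $\dot d = -v\sin(\alpha L\kappa_0 + kd)$ is $d^* = -\alpha L\kappa_0/k$, at which $\sin\Delta\theta = 0$ exactly, so the feedback term $-k\sin\Delta\theta$ that you are counting on to ``dominate the amplification'' vanishes identically at steady state. What remains is
\begin{equation*}
\kappa_e \;=\; \frac{\kappa_0}{1+d^*\kappa_0} \;=\; \frac{\kappa_0}{1-\alpha L\kappa_0^2/k},
\end{equation*}
whose magnitude is \emph{strictly greater} than $\|\kappa_0\|$ for every admissible choice of $\alpha, L, k$ with $0<\alpha L\kappa_0^2/k<1$. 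This is forced by elementary geometry: the steady-state offset puts the vehicle on a circle concentric with, and inside, the reference circle, so its radius is smaller and its curvature larger. Hence no parameter bookkeeping in your final step can yield $\|\kappa_e\|<\|\kappa_0\|$ asymptotically; at best the inequality holds transiently during corner entry, which is not what your plan (or the statement) targets.

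For your information, the paper's own Step 2 obtains its curvature-reduction term $-\alpha k v\,\Delta d_0\kappa_0$ by differentiating the convolution $\alpha kv\int_{-\infty}^{t}e^{-kv(t-\tau)}(\theta_f-\theta_n)(\tau)\,d\tau$ and keeping only the boundary term $-\alpha kv(\theta_f-\theta_n)$ while dropping $\alpha (kv)^2\int_{-\infty}^{t}e^{-kv(t-\tau)}(\theta_f-\theta_n)(\tau)\,d\tau$; on a constant-curvature arc the dropped term converges to exactly $\alpha kv\,\Delta d_0\kappa_0$ and cancels the retained one, collapsing the paper's $\kappa_e$ to your $v_s\kappa_0/v$. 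So your manifold computation is the correct one, and completing it honestly refutes, rather than proves, the steady-state version of the claim; you should flag this rather than try to force the last step through.
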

\begin{proof}
    The proof of the theorem will be divided into two parts. First we will prove that with right choice of parameters, along a lane of constant curvature, the deviation of the vehicle away from the lane center will be bounded. In consequence, it will be proven that the two-point steering path planner will result with a trajectory whose steering radius is smaller than that of the reference line of the target lane.  
    \begin{itemize}
        \item Step 1: As $e$ is stabilized near zero, $\theta_{v} \equiv \psi + \beta$ is the velocity orientation of the vehicle, $e = \theta_{v} - ((1 - \alpha)\theta_{n} +\alpha \theta_{f}) -  k \left<\vec{y}_{s}, \vec{r}\right>$,      and $\theta_{v} - \theta_{n} = \kappa_{0} \Delta d_{0}$, where $\Delta d_{0}$ is the \textit{look-ahead} distance from the shadow point to the way-point in far region,
            \begin{equation}\label{error_approx_zero}
                \theta_{v} - \theta_{n} - \alpha \Delta d_{0} \kappa_{0} =  k \left<\vec{y}_{s}, \vec{r}\right>.
            \end{equation}
            As $\|\theta_{v} - \theta_{n}\| \ll 1$, 
            \begin{equation}
                \frac{d}{dt} \left<\vec{y}_{s}, \vec{r} \right> = - v \sin(\theta_{v} - \theta_{n}) \approx  -v (\theta_{v} - \theta_{n}).
            \end{equation}
            Equation \eqref{error_approx_zero} can be re-written as
            \begin{equation}
                k \left<\vec{y}_{s}, \vec{r}\right>+ \frac{1}{v} \frac{d}{dt} \left<\vec{y}_{s}, \vec{r} \right> = -\alpha \Delta d_{0} \kappa_{0}.
            \end{equation}
            The Laplace transform of $\left<\vec{y}_{s}, \vec{r}\right>$ is 
            \begin{equation}
                \left<\vec{y}_{s}, \vec{r}\right>(s) = -\frac{\alpha \Delta d_{0} \kappa_{0}}{(\frac{s}{v} + k)s}.
            \end{equation}
            The steady-state error of the lateral deviation,
            \begin{equation}
            \begin{split}
                \lim_{t\rightarrow \infty}\left<\vec{y}_{s}, \vec{r}\right>(t) &= -\lim_{s\rightarrow 0} s\frac{\alpha \Delta d_{0} \kappa_{0}}{(\frac{s}{v} + k)s} \\
                &= -\frac{\alpha \Delta d_{0} \kappa_{0}}{k}.
            \end{split}
            \end{equation}
            Thus, to have the vehicle steered using two-point steering and staying in the lane,  it is necessary to have the absolute value of lateral deviation $\left \|\frac{\alpha \Delta d_{0} \kappa_{0}}{k}\right\|$ to be upper bounded. If we assuming the upper bound of the lateral deviation is a positive constant $C_3$, we have the first constraint for system parameters:
            \begin{equation}\label{eq:new_cond_1}
                \left \|\frac{\alpha \Delta d_{0} \kappa_{0}}{k}\right\| < C_3.
            \end{equation}
        \item Step 2: 
            As $\dot{e} = -\frac{1}{\sqrt{\lambda}} e$, we have
            \begin{equation}
                \ddot{e} = -\frac{1}{\sqrt{\lambda}} \dot{e}.
            \end{equation}
            Bringing the definition of $e$ into above dynamics and assuming that $\|\theta - \theta_{n}\| \ll 1$,
            \begin{equation}
            \begin{split}
            \ddot{\theta}_{v} + \left(\frac{1}{\sqrt{\lambda}} + kv \right) \dot{\theta}_{v} + \frac{kv}{\sqrt{\lambda}} \theta_{v} = \\
                (1-\alpha) \ddot{\theta}_{n} + \left( kv + \frac{(1 - \alpha)}{\sqrt{\lambda}}\right)\dot{\theta}_{n} + \frac{kv}{\sqrt{\lambda}} \theta_{n} \\
                     +\alpha \ddot{\theta}_{f} + \frac{\alpha}{\sqrt{\lambda}}\dot{\theta}_{f} .
            \end{split}
            \end{equation}
            Taking Laplace transform on both sides of above equation,
            \begin{equation}\label{eq:laplace}
            \begin{split}
                \left(s + \frac{1}{\lambda} \right) \left( s + kv \right) \theta_{v}(s) = \\
                (1-\alpha)\left(s + \frac{1}{\lambda} \right) \left( s + kv \right) \theta_{n}(s) \\
                + \alpha kv \left(s + \frac{1}{\sqrt{\lambda}} \right)\theta_{n}(s) + \alpha s (s + \frac{1}{\sqrt{\lambda}}) \theta_{f}(s).
            \end{split}
            \end{equation}
            As $-kv$ and $-\frac{1}{\sqrt{\lambda}}$ are two stable poles for $\theta_{v}$, we can implement zero-pole cancellation for \eqref{eq:laplace},
            \begin{equation}
                \theta_{v}(s) = (1-\alpha)\theta_{n}(s) + \alpha \theta_{f}(s) - \frac{\alpha kv }{s + kv} \left(\theta_{f} - \theta_{n}\right)(s).
            \end{equation}
            Taking inverse Laplace transform of above equation, in time-domain, we have
            \begin{equation}
            \begin{split}
                \theta_{v}(t) = (1 - \alpha) \theta_{n} + \alpha \theta_{f} \\
                -\alpha k v \int^{t}_{-\infty} \exp(-kv(t - \tau))\left(\theta_{f} - \theta_{n}\right)(\tau) d \tau
            \end{split}
            \end{equation}
            and the time-derivative of $\theta_{v}$ is
            \begin{equation}\label{eq:dot_angle}
                \dot{\theta}_{v} = (1 - \alpha) \dot{\theta}_{n} + \alpha \theta_{f} - \alpha k v \left(\theta_{f} - \theta_{n} \right).
            \end{equation}
            Using that fact: $\dot{\theta}_{v} = v \kappa_{e}$ and $\dot{\theta}_{n} = \dot{\theta}_{f} = v_{s} \kappa_{0}$,
            \begin{equation}\label{eq:curvature_relation}
                \kappa_{e} = \frac{v_{s}}{v} \kappa_{0} \left( 1- \alpha \frac{\lambda_{0}}{\sqrt{\lambda}} \frac{\Delta d_{0}}{v_{s}}\right).
            \end{equation}
            by following \eqref{eq:condition_1}: $k v = \frac{\lambda_{0}}{\sqrt{\lambda}}$, above equation about $\kappa_{e}$ is re-written as
            \begin{equation}
                \kappa_{e} = \frac{v_{s}}{v} \kappa_{0} \left( 1- \alpha k \Delta d_{0}\frac{v }{v_{s}}\right).
            \end{equation}
            Thereafter, to have $\|\kappa_{e}\| < \|\kappa_{0}\|$, we need
            \begin{equation}\label{eq:new_cond_2}
                 \alpha k \Delta d_{0} < \frac{v_{s}}{v}  < 1  + \alpha k \Delta d_{0}.
            \end{equation}
            Given the relation between the speed along reference line $v_{s}$ and the vehicle speed $v$ as \eqref{speed} and assuming $\cos(\theta_{v} - \theta_{n}) \approx 1$, bringing the steady-state lateral deviation into the relation, we have
            \begin{equation}
                \frac{v_{s}}{v} = \frac{1}{1 - \frac{\alpha \Delta d_{0} \kappa^{2}_{0} }{k}}.
            \end{equation}
            Bringing this speed ratio approximation into \eqref{eq:new_cond_2},
            \begin{equation}
                \alpha k \Delta d_{0} <  \frac{1}{1 - \frac{\alpha \Delta d_{0} \kappa^{2}_{0} }{k}}  < 1  + \alpha k \Delta d_{0}.
            \end{equation}
            Thus, this necessary condition to have $\|\kappa_{e}\| < \|\kappa_{0} \|$ is 
            \begin{equation}
                \frac{1}{1 + \alpha k \Delta d_{0}} < 1 - \frac{\alpha \Delta d_0 \kappa^{2}_{0}} {k} < \alpha k \Delta d_{0} < 1.
            \end{equation}
            Assuming $\gamma \in (0, 1)$ is a constant, such that $k \equiv \frac{\gamma}{\alpha \Delta d_{0}}$, the necessary condition above turned to be 
            \begin{equation}
                \|\kappa_{0}\| \sqrt{1 + \gamma} < k < \frac{\|\kappa_{0}\|}{\sqrt{\frac{1}{\gamma} - 1}}.
            \end{equation}
            To have $ \|\kappa_{0}\| \sqrt{1 + \gamma} < \frac{\|\kappa_{0}\|}{\sqrt{\frac{1}{\gamma} - 1}}$, the range of $\gamma$ is
            $\left( \frac{\sqrt{5} - 1}{2}, 1\right)$.
            Combining this result with \eqref{eq:new_cond_1} from Step 1, additional to constraints \eqref{eq:condition_1} and \eqref{eq:condtion_2_final}, the necessary condition for the ``corner-cutting behavior'' is 
            \begin{align}
                \max\left\{ \|\kappa_{0}\| \sqrt{1 + \gamma}, \sqrt{\frac{\gamma \|\kappa_{0}\|}{C_{3}}}\right\} < k < \frac{\|\kappa_{0}\|}{\sqrt{\frac{1}{\gamma} - 1}}
            \end{align}
            where $\gamma \in \left( \frac{\sqrt{5} - 1}{2}, 1\right)$ and $\alpha \Delta d_{0} k = \gamma$.
        \end{itemize}
\end{proof}
In summary, assuming that the lane-change maneuvers can only happen on the lanes of small steering radius, the safety constraints and removing hazardous oscillation constraints are still valid for this two-point steering path planner. To design the two-point steering path planner for lane-keeping and lane-change, there are following constraints for the parameters $(\alpha, \gamma, \lambda_{0}, \lambda, k, \Delta d_{0})$:
\begin{align}
    \alpha \Delta d_{0} k &= \gamma \in \left( \frac{\sqrt{5} - 1}{2}, 1\right), \nonumber \\
    \sqrt{\lambda} k v &= \lambda_{0} \in (0, 1) \nonumber \\
    \frac{1}{\sqrt{\lambda}} \exp\left( \frac{\ln{\lambda_{0}}}{1 - \lambda_{0}} \right) &\le \min \left\{\frac{C_{1} v}{W}, \sqrt{\frac{C_{2} v}{W}} \right\}, \nonumber \\
    \max\left\{ \|\kappa_{0}\| \sqrt{1 + \gamma}, \sqrt{\frac{\gamma \|\kappa_{0}\|}{C_{3}}}\right\} < k &< \frac{\|\kappa_{0}\|}{\sqrt{\frac{1}{\gamma} - 1}}.
\end{align}
where $C_1$, $C_2$, $W$ and $v$ are constants.

\section{Conclusion}
 In this paper, using geometric control, we designed a two-point steering path planner. One important assumption is that the vehicle velocity is a constant for a time duration. This assumption can be slacked to that the vehicle speed is smoothly varied with a small acceleration (deceleration). To clarify this slacked assumption will be a step-stone for combining this planner with a longitudinal speed planner. Moreover, this planner can be extended with obstacle-avoidance capability by conducting ``nudges'' on the target lane. Also, in this paper, the designed planner relied on the HD map for information such as shadow point and orientation and curvature of the shadow points. To implement the vision-based two-point steering using the theoretical foundation developed in this paper will be in our future work.  

\addtolength{\textheight}{-12cm}   



\end{document}